\newcommand{\modelname}{Toolken+\xspace}
\def\RR{\mathbb{R}}
\def\x{\mathbf{x}}
\def\h{\mathbf{h}}
\def\cV{\mathcal{V}}
\def\cT{\mathcal{T}}
\def\Wv{\mathbf{W}_{\cV}}
\def\Wt{\mathbf{W}_{\cT}}
\def\Wtt{\mathbf{W}_{\cT'}}
\def\rejtool{\textsc{Rej}}
\newcommand{\pc}[2]{p\left(#1\middle|#2\right)}
\newcommand{\pctool}[2]{p_{\mathrm{aug}}\left(#1\middle|#2\right)}
\newcommand{\pcint}[2]{p_{\mathrm{rank}}\left(#1\middle|#2\right)}
\newcommand{\pcllm}[2]{p_{\mathrm{LLM}}\left(#1\middle|#2\right)}
\def\mtk{\mathbf{m}\left(\cT_k\right)}
\newcommand{\softmax}[1]{\mathrm{softmax}\left(#1\right)}
\def\pcnext{\pc{\x_{i+1}}{\x_{\le i}}}
\newcommand\indic[1]{\left\llbracket #1 \right\rrbracket}
\newtheorem{theorem}{Proposition}
\title{\modelname: Improving LLM Tool Usage with Reranking and a Reject Option}
\author{
 \textbf{Konstantin Yakovlev\thanks{Work was done while at Huawei Noah’s Ark Lab}\textsuperscript{1}},
 \textbf{Sergey Nikolenko\textsuperscript{2,3}},
 \textbf{Andrey Bout\textsuperscript{*4}}
\\
\\
 \textsuperscript{1}HSE University, Moscow, Russia\\
 \textsuperscript{2}ISP RAS Research Center for Trusted Artificial Intelligence, Moscow, Russia,\\
 \textsuperscript{3}St. Petersburg Department of the Steklov Institute of Mathematics, Russia,\\
\textsuperscript{4}Yandex, Moscow, Russia\\
 \small{
   \textbf{Correspondence:} \href{mailto:kdyakovlev@hse.ru}{kdyakovlev@hse.ru}, \href{mailto:sergey@logic.pdmi.ras.ru}{sergey@logic.pdmi.ras.ru}, \href{mailto:andrey-bout@yandex-team.ru}{andrey-bout@yandex-team.ru}
 }
}
\begin{document}
\maketitle

\begin{abstract}
The recently proposed ToolkenGPT tool learning paradigm demonstrates promising performance but suffers from two major issues: first, it cannot benefit from tool documentation, and second, it often makes mistakes in whether to use a tool at all. We introduce \modelname that mitigates the first problem by reranking top $k$ tools selected by ToolkenGPT and the second problem with a special ``Reject'' option such that the model will generate a vocabulary token if ``Reject'' is ranked first. We demonstrate the effectiveness of \modelname on multistep numerical reasoning and tool selection tasks.
\end{abstract}

\section{Introduction}

Recently, large language models (LLM) have been extended by allowing access to external tools such as symbolic computation engines~\cite{gou2023tora}, databases that serve as external memory~\cite{mu-etal-2023-augmenting}, and others~\cite{Schick2023ToolformerLM, qin2023toolllm}. Tool learning paradigms can be broadly divided into
\begin{inparaenum}[(1)]
\item \emph{supervised fine-tuning} to leverage tools \cite{Schick2023ToolformerLM}, 
which works well but lacks flexibility and cannot generalize to unseen tools, and
\item \emph{in-context learning}~\cite{lu2023chameleon}, where demonstrations are provided in the prompt; this method is very easy to extend and generalize but often bumps against inherent context length limitations of LLMs~\cite{tay2021long}.
\end{inparaenum}
%
\emph{ToolkenGPT} \cite{hao2023toolkengpt} aims to have the best of both worlds:
each tool is represented by a special token called \emph{toolken} that has a trainable embedding and extends the vocabulary. Once a toolken has been predicted, the model switches into ``tool mode'' where it uses in-context examples to fill in the tool's arguments, calls it, sends the results back to text, and returns to language modeling mode.
Toolkens require very few parameters (toolken embeddings) to be trained while keeping LLM weights frozen, and there is no limit on the amount of data to train these parameters.

\begin{figure}[!t]
\includegraphics[width=\linewidth]{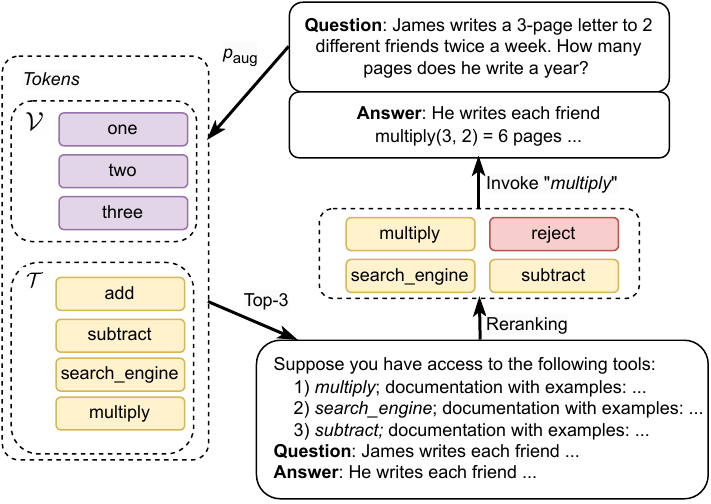}

\caption{\modelname sample operation.}\label{fig:toolken}
\end{figure}

In this work, we extend ToolkenGPT with two novel features that aim to fix
two important issues. First, ToolkenGPT cannot use tool documentation known to be helpful for LLMs~\cite{hsieh2023tool}; we will show that ToolkenGPT is often unsure which tool to use, and documentation could help decide this. To this end, we introduce a copy of toolken embeddings that rerank retrieved tools, i.e., take top-$k$ tool candidates, prepend the prompt with their documentations, and ask the LLM to choose the most relevant tool. Second, ToolkenGPT often makes mistakes in judging when to use tools, calling them too often. To alleviate this, we introduce an extra $\rejtool$ (``\emph{Reject}'') ``tool'' that switches back to text generation without invoking any tools. $\rejtool$ is provided as an option for the reranking mechanism introduced above. The entire operation of \modelname is illustrated in Figure~\ref{fig:toolken}. 

Overall, we aim to minimize false positive errors for tool invocations and tool misclassification rate for ToolkenGPT. This significantly improves the model's robustness, allowing for developing more trusted LLM agents that can have access to a wider variety of tools.
%
A general tool usage paradigm for LLMs~\cite{yang2023gpt4tools, huang2023metatool} has four stages: whether to use a tool, which tool to use, infilling the arguments, and dealing with the tool's output. Our methods improve the first and second stages of this process.
Moreover, we provide a formal justification for the toolken training algorithm based on variational inference.
We evaluate our results on the GSM8K \cite{cobbe2021training}, MetaTool \cite{huang2023metatool}, and VirtualHome \cite{Puig_2018_CVPR} datasets, showing significant improvements.

Thus, our contributions are as follows:
\begin{inparaenum}[(1)]
    \item solutions to issues associated with the first two stages of the tool usage process for LLMs;
    \item a theoretically grounded training procedure for the introduced toolken embeddings;
    \item an empirical evaluation study supporting the efficiency of our approach.
\end{inparaenum}
Below, Section~\ref{sec:related} compares our approach with recent related work, Section~\ref{sec:method} introduces our modifications and formal justification for training and inference, Section~\ref{sec:eval} presents experimental results, and Section~\ref{sec:concl} concludes the paper.

\section{Related Work}\label{sec:related}

\textbf{Quantifying the uncertainty of LLMs}. Recently, \citet{zhang2023r} introduced a tuning method to teach LLMs to refrain from answering the question if the LLM is not sure in its answer, reducing hallucinations and improving uncertainty estimation. In contrast to this study, our approach does not require any fine-tuning of the LLM, instead we learn an embedding corresponding to the rejection tool. In another line of research, \citet{diao2023active} proposed active prompting focusing on finding the best task-specific prompt. In this work, we focus on task-independent prompts.

\textbf{Natural language feedback}. \citet{huang-etal-2023-large} suggested the idea to use LLM feedback as training inputs, using the generated rationale-augmented answers to fine-tune the model. To alleviate incorrect reasoning steps with tool usage, \citet{paul2023refiner} suggested to generate natural language feedback from a critic model learned separately. Note that although this work also addresses the issue of using incorrect operations in the Math World Problem task, it requires to train a critic model. 

Overall, our approach adheres to the paradigm of prompted refiners as shown in \cite{madaan2023self, shinn2023reflexion, gou2023critic}, where the same frozen model is used for reasoning and providing feedback. Our method does not invoke the tools themselves and does not work with their outputs, only produces special tokens for tool invocation.
In a recent work, \citet{an2024learning} also used model mistake correction to improve the quality of solving math problems. However, their suggested approach relied on GPT-4 output and, again, needed to fine-tune the model.

\textbf{Chain of thought reasoning}. 
Chain of thought prompting was originally introduced by \citet{wei2023chainofthought, zhou2022least} to enhance the ability of large language models to perform complex reasoning. \citet{NEURIPS2022_8bb0d291} showed that a LLM is a good zero-shot reasoner with a simple prompt before each answer.
To further improve reasoning skills, \citet{wang2022self} introduced self-consistency decoding that reranks the generated rationales by taking a majority vote over the final numerical answers. Similarly, our approach also benefits from reranking that can use additional information such as tool documentation. 

Recently, \citet{zelikman2022star} proposed a bootstrapping technique that was able to improve the performance on reasoning tasks even without a massive rationale dataset. In their approach, the model is repeatedly fine-tuned on a dataset of self-generated rationales; our approach is similar to this one since bootstrapping is used to train the introduced toolkens but we do not need to tune the LLM weights.

\textbf{Tool-augmented language models}. One direction for augmenting LLMs with external tools is fine-tuning for tool use \cite{qin2023toolllm, liang2023taskmatrix, Schick2023ToolformerLM, patil2023gorilla}; these methods achieve excellent performance but suffer from poor adaptability to unseen tools and high computational requirements to fine-tune the LLM. 

Another paradigm learns tool use in context, from documentation and/or demonstrations added to the LLM input \cite{lu2023chameleon, paranjape2023art, shen2023hugginggpt}; these approaches do not require fine-tuning and can learn a new tool given a handful of demonstrations but suffer from performance degradation because of limited context length. 

In this work, we propose an improvement for the Toolken paradigm recently proposed by \citet{hao2023toolkengpt} that aims to take the best of both worlds. It introduces learnable tool embeddings (toolkens) trained on a dataset of tool use examples while keeping the weights of the LLM frozen; it also leverages in-context learning to fill in tool arguments.

\section{Method}\label{sec:method}

\textbf{ToolkenGPT}.
ToolkenGPT introduces an embedding for each tool concatenated with the language modeling head. Formally, the embedding matrix $\Wv \in \RR^{|\cV| \times d}$,
where $\cV$ is the original vocabulary of tokens and $d$ is the latent dimension, is extended with a matrix $\Wt\in \RR^{|\cT| \times d}$ for a set of tools $\cT = \{t_1, \ldots, t_{|\cT|}\}$, and the next token probability for the augmented LLM is calculated as
 $\pctool{\x_i}{\x_{<i}} = \softmax{\left[\Wv, \Wt\right]\h_{i-1}}$.
Inference is divided into two interleaving stages: reasoning mode, where the model generates a rationale using $\pctool{\x_i}{\x_{<i}}$, and tool mode, where it infills tool arguments given a prompt with usage examples. Thus, $\Wt$ are learned by solving
\begin{equation}\label{eq:opt2}\small
\min_{\Wt}\sum\nolimits_{X\in D}\sum\nolimits_{i=1}^{|X|}-\log\pctool{\x_{i+1}}{\x_{\le i}}.
\end{equation}


\textbf{\modelname}.
First, we extend the tool set as $\cT' = \cT \cup \{\rejtool\}$, where $\rejtool$ is a special tool responsible for switching back to reasoning mode. Second, instead of taking the best proposed tool we take the retrieved subset of top $k$ tools $\cT_k\subseteq\cT'$ and ask the model to choose one. The proposed \modelname model takes the previously generated sequence and top $k$ tools retrieved  by ToolkenGPT $\cT_k$ as input and is asked to generate a tool from $\cT_k\cup\{\rejtool\}$. Formally, \modelname produces $\pcint{\x_i}{\x_{<i}, \cT_k} = \softmax{\Wtt\h_{i-1} + \mtk}$, where $\mtk\in\RR^{|\cT'|}$ is the mask vector with $\mtk_t=0$ for $t\in\cT_k\cup\{\rejtool\}$ and $-\infty$ otherwise.
%
\modelname's inference procedure is shown in Algorithm~\ref{alg:inf}. 


\textbf{Approximate inference}.
The next token probabilities in Algorithm \ref{alg:inf} are given by

\noindent{\small
\begin{multline*}
\pcnext = \indic{\x_{i+1}\in\cV}\left(\pctool{\x_{i+1}}{\x_{\le i}} \right. \\
 + \left.\pctool{\cT}{\x_{\le i}}\pcint{\rejtool}{\x_{\le i}, \cT_k}\pcllm{\x_{i+1}}{\x_{\le i}}\right) \\
  + \indic{\x_{i+1}\in\cT}\pctool{\cT}{\x_{\le i}}\pcint{\x_{i+1}}{\x_{\le i}, \cT_k},
\end{multline*}}

\noindent
where $\indic{\cdot}$ is the indicator. For a dataset $D=\{X_n\}_{n=1}^N$, tool embeddings are found by solving
\begin{equation}\label{eq:opt1}\small
\min_{\Wt,\Wtt}\sum\nolimits_{X\in D}\sum\nolimits_{i=1}^{|X|}-\log\pcnext,
\end{equation}
which is non-differentiable w.r.t. $\Wt$, so we optimize the original ToolkenGPT model with its own criterion \eqref{eq:opt2}
and then optimize~\eqref{eq:opt1} w.r.t. $\Wtt$ only. Problem~\eqref{eq:opt1}, however, suffers from computational instabilities caused by the product of model probabilities in $\pcnext$, so we propose to optimize a computationally stable upper bound instead.

\begin{theorem}[Naive upper bound]
The following is a computationally stable upper bound of \eqref{eq:opt1}, up to an additive constant independent of $\Wtt$:

{\noindent\small
\begin{multline}\label{eq:opt3}
    \sum\nolimits_{X\in D}\sum\nolimits_{i=1}^{|X|} 
    \left( -\indic{\x_{i+1}\in\cV}\log\pcint{\rejtool}{\x_{\le i}, \cT_k} \right. \\
    \left. -\indic{\x_{i+1}\in\cT}\log\pcint{\x_{i+1}}{\x_{\le i}, \cT_k} \right).
\end{multline}
}
\end{theorem}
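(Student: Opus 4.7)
The plan is to verify the bound summand by summand, splitting on the value of the indicator in $\pcnext$, and on each branch lower-bounding $\pcnext$ by a single product of probabilities (rather than a sum) so that $-\log$ splits additively into terms that are either $\Wtt$-independent (hence absorbed into the constant) or exactly the expression appearing in \eqref{eq:opt3}. Since $-\log$ is monotone decreasing, any lower bound on $\pcnext$ yields an upper bound on $-\log\pcnext$, which is the inequality direction we need.

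First I would dispose of the easy case $\x_{i+1}\in\cT$. Here $\pcnext$ is already a product $\pctool{\cT}{\x_{\le i}}\,\pcint{\x_{i+1}}{\x_{\le i},\cT_k}$, so $-\log\pcnext$ splits into $-\log\pctool{\cT}{\x_{\le i}}-\log\pcint{\x_{i+1}}{\x_{\le i},\cT_k}$. Because we hold $\Wt$ fixed (trained previously via \eqref{eq:opt2}) and $\cT_k$ is determined by $\pctool{\cdot}{\x_{\le i}}$ alone, the first term does not depend on $\Wtt$ and goes into $C$; the second term is exactly the $\cT$-indicator contribution in \eqref{eq:opt3}.

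Next, for $\x_{i+1}\in\cV$, $\pcnext$ is a sum of two nonnegative quantities. I would drop the $\Wtt$-free summand $\pctool{\x_{i+1}}{\x_{\le i}}$, obtaining the lower bound
\[
\pcnext\;\ge\;\pctool{\cT}{\x_{\le i}}\,\pcint{\rejtool}{\x_{\le i},\cT_k}\,\pcllm{\x_{i+1}}{\x_{\le i}}.
\]
Taking $-\log$ splits this into three terms; only $-\log\pcint{\rejtool}{\x_{\le i},\cT_k}$ depends on $\Wtt$ (matching the $\cV$-indicator contribution in \eqref{eq:opt3}), while $-\log\pctool{\cT}{\x_{\le i}}$ and $-\log\pcllm{\x_{i+1}}{\x_{\le i}}$ are functions of $\Wt,\Wv$ and the frozen LLM only, hence are absorbed into $C$.

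Summing the two case bounds over $i$ and $X\in D$ yields \eqref{eq:opt3} plus a constant independent of $\Wtt$. Computational stability follows because the bound involves only $-\log p$ of individual probabilities (i.e., a single log-softmax output), never a log of a sum or of a product of three probabilities, so no numerically delicate log-sum-exp reductions are required. The only subtlety worth flagging is that the retrieved subset $\cT_k$ must itself be treated as a function of the frozen $\pctool{\cdot}{\x_{\le i}}$ rather than of $\Wtt$, so that dropping factors into $C$ is legitimate; once this is observed, the argument reduces to nonnegativity plus monotonicity of $-\log$, and there is no genuinely hard step.
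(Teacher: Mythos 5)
Your proof is correct and follows essentially the same route as the paper's: split on whether $\x_{i+1}\in\cV$ or $\x_{i+1}\in\cT$, lower-bound $\pcnext$ in the vocabulary case by dropping the nonnegative summand $\pctool{\x_{i+1}}{\x_{\le i}}$, and absorb the $\Wtt$-independent log-factors into the additive constant. Your explicit remarks that $\Wt$ is frozen and that $\cT_k$ depends only on the ToolkenGPT probabilities are a welcome clarification of why those terms may be treated as constants, but they do not change the argument.
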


\begin{proof}
We transform (omitting $\cT_k$ for brevity)

{\noindent\small
\begin{multline*}
\log\pcnext = \log\pcnext\left(\indic{\x_{i+1}\in\cV} + \right. \\
\left. + \indic{\x_{i+1}\in\cT}\right) \ge \indic{\x_{i+1}\in\cV}\left(\log\pcint{\rejtool}{\x_{\le i}} + \right. \\ \left. + \log \pctool{\cT}{\x_{\le i}} + \log\pcllm{\x_{i+1}}{\x_{\le i}} \right) + \\
+ \indic{\x_{i+1}\in \cT}(\log \pctool{\cT}{\x_{\le i}} + \log \pcint{\x_{i+1}}{\x_{\le i}}),
\end{multline*}
}

\noindent
where the inequality holds because $\pctool{\x_{i+1}}{\x_{\le i}}$ is always nonnegative, and then obtain~\eqref{eq:opt3} by removing the terms independent of $\Wtt$.
\end{proof}




\noindent
Bound~\eqref{eq:opt3} is differentiable but still computationally hard: it requires $|X|$ forward passes to find the loss for a single data point, retrieving top $k$ tools for every $i$. Therefore, we propose a simplified objective that uses only $i$ where ToolkenGPT erroneously predicts a toolken instead of a regular token:

{\noindent\small
\begin{multline}\label{eq:opt5}
\min_{\Wtt}\sum_{X\in D}\sum_{i=1}^{|X|}\left(
\vphantom{\begin{matrix}\x_{i+1}\in\cV \\ \arg\max p_{\mathrm{t}}\in \cT\end{matrix}}
-\indic{\x_{i+1}\in\cT}\log\pcint{\x_{i+1}}{\x_{\le i}, \cT_k} \right. \\ \left.
 -\indic{\begin{matrix}\x_{i+1}\in\cV \\ \arg\max p_{\mathrm{t}}\in \cT\end{matrix}}
 \log\pcint{\rejtool}{\x_{\le i}, \cT_k}\right).
\end{multline}}

\noindent
We train \modelname with~\eqref{eq:opt5} to correct the errors of ToolkenGPT or rescore its outputs. Objective \eqref{eq:opt5} is not guaranteed to be an upper bound of~\eqref{eq:opt2} but can be viewed as an approximation via hard negative mining (``hard'' prefixes $\x_{\leq i}$ are those where ToolkenGPT incorrectly predicts tool use).

\begin{algorithm}[!t]\small
\caption{\modelname inference}\label{alg:inf}
\KwData{$\pcllm{\x_i}{\x_{<i}}$, $\pctool{\x_i}{\x_{<i}}$, $\pcint{\x_i}{\x_{<i}, \cT_k}$, user query $q$}
\KwResult{Rationale $\x_{1:n}$ (with a user query)}
$x \gets q$, $i \gets |q|$\;
\While{$\x_i \not= \textrm{EOS}$}{
$\x_{i+1}^{(0)} \sim \pctool{\cdot}{\x_{\leq i}}$\;
\eIf{$\x_{i+1}^{(0)}\in \cT$}{
    $\cT_k \gets \mathrm{TopkTools}(\pctool{\cdot}{\x_{\leq i}})$\;
    $\x_{i+1}^{(1)} \sim \pcint{\cdot}{\x_{\leq i}, \cT_k}$\;
    \leIf{$\x_{i+1}^{(1)} = \rejtool$}{
        $\x_{i + 1} \sim \pcllm{\cdot}{\x_{\leq i}}$
    }{
        $\x_{i + 1} \gets \x_{i + 1}^{(1)}$
    }
}{
    $\x_{i + 1} \gets \x_{i + 1}^{(0)}$\;
}
$i \gets i + 1$\;
}
\end{algorithm}

\section{Experiments}\label{sec:eval}

\textbf{Datasets and setup}.
We evaluate \modelname on three datasets.
\emph{GSM8K}~\cite{cobbe2021training} is a parallel dataset of math problems and their rationales. We use the multistep reasoning task with four arithmetic operations as tools, removing equations from the rationales except for intermediate results (e.g., ``Weng earns 12/60 = 0.2 per minute'' becomes ``Weng earns 0.2 per minute'') to make tool selection harder.
\emph{MetaTool}~\cite{huang2023metatool} is a parallel dataset of user queries and tools with their descriptions; we use all available tools for tool selection.
\emph{VirtualHome}~\cite{Puig_2018_CVPR} is a dataset of complex household activities represented by plans, sequences of verb-object expressions where verbs and objects are external tools. Following \citet{hao2023toolkengpt}, we split it into a training set of 247 tasks and a test set of 50 tasks, using 25 verbs and 32 objects in total. Hyperparameter values and detailed experimental settings are reported in the \nameref{appendix}.
We use open source LLMs: Llama2-7B, Llama2-7B-chat~\cite{touvron2023llama,touvron2023llama2}, Vicuna-7B, Vicuna-13B~\cite{vicuna2023}.

The hyperparameters used to train all our models are reported in Table \ref{tab:hyperparams}. Overall, we used one data point per parameters update and used the same hyperparameters for ToolkenGPT and \modelname regardless of the LLM. All models were trained using the Adam optimizer~\cite{kingma2014adam}.

\begin{table}[!t]\centering\small
    \begin{tabular}{lcc}\toprule
        \textbf{Dataset} & \textbf{Learning rate} & \textbf{Epochs}\\ \midrule
        GSM8K & $10^{-4}$ & 5 \\
        MetaTool & $10^{-4}$ & 1-3 \\
        VirtualHome & $10^{-3}$ & 5-10 \\ \bottomrule
    \end{tabular}
    \caption{Hyperparameters used for training ToolkenGPT and \modelname.}
    \label{tab:hyperparams}
\end{table}

For \emph{MetaTool}, we took single-tool data that contains about 20K samples. We split it into a test split of 2K examples and two folds of about 9K examples each. To construct the training split for the GSM8K dataset, we removed all equations except for intermediate results. Moreover, we also performed the same procedure with released prompts for reasoning mode and tool mode of ToolkenGPT~\cite{hao2023toolkengpt}. Additionally, for the rejection mechanism of \modelname we used the processed prompt of the reasoning mode of ToolkenGPT. For \emph{VirtualHome}, we follow the setup of ToolkenGPT~\cite{hao2023toolkengpt} with the only difference that we split the training data into two folds. \modelname reranks top-3 retrieved objects by ToolkenGPT listed in ascending order by relevance. The prompts used for \modelname are shown in the Appendix.

\begin{table}[!t]\centering\small\setlength{\tabcolsep}{2pt}
\begin{tabular}{p{.15\linewidth}lcccc}\toprule
\textbf{LLM} & \textbf{Tool model} & \textbf{MetaTool} & \textbf{GSM8K} & \multicolumn{2}{c}{\textbf{VirtualHome}} \\
& & Rec@1 & Match & Strict & Relaxed \\\midrule
Vicuna- & 4-shot & - & 16.2 & 0.04 & 0.2 \\
7B    & ToolkenGPT & 0.623 & 16.9 & \textbf{0.62} & 0.72 \\
    & \modelname & \textbf{0.643} & \textbf{18.8} & 0.48 & \textbf{0.74} \\ \midrule
Vicuna- & 4-shot & - & 17.8 & 0.16 & 0.30 \\
13B & ToolkenGPT & 0.646 & 18.4 & 0.34 & 0.54 \\
 & \modelname & \textbf{0.662} & \textbf{19.1} & \textbf{0.58} & \textbf{0.66} \\ \midrule
Llama2- & 4-shot & - & 12.7 & 0.08 & 0.24 \\
7B-chat & ToolkenGPT & 0.642 & 11.7 & 0.44 & \textbf{0.66} \\
 & \modelname & \textbf{0.692} & \textbf{12.8} & \textbf{0.56} & \textbf{0.66} \\ \midrule
Llama2- & 4-shot & - & \textbf{10.3} & 0.18 & 0.26 \\
13B-chat & ToolkenGPT & 0.704 & 8.8 & 0.18 & 0.20 \\
 & \modelname & \textbf{0.733} & 9.6 & \textbf{0.54} & \textbf{0.58} \\ \bottomrule
\end{tabular}

\caption{Experimental results on the \emph{MetaTool}, GSM8K, and \emph{VirtualHome} datasets.}\label{tab:eval}
\end{table}


\textbf{Tool selection}.
In \emph{MetaTool}, the task is to retrieve a single tool given a query, so there is no need to use the proposed $\rejtool$ tool, and we use this task to validate the idea of reranking ToolkenGPT's outputs in isolation. Thus, we remove the first term from the loss function~\eqref{eq:opt5}.
We split the training set into two folds, train ToolkenGPT on the first fold, and train \modelname on the second fold given the top 5 outputs of ToolkenGPT. We compare both models on a held-out set and report Recall@1 in Table~\ref{tab:eval}. There is no few-shot version for \emph{MetaTool} since 200 tool descriptions do not fit into a prompt.
We see that \modelname significantly improves the results across all considered LLMs.

        

\textbf{Multistep reasoning}.
In the GSM8K dataset, the task is to generate a numerical answer to a question using the four basic arithmetic operations as tools. Here we demonstrate that the rejection mechanism improves the resulting accuracy even in isolation, so in this setup $k=1$.
We again split the data into two folds, train ToolkenGPT on one fold, collect the errors made on the second fold and train \modelname with an extended set of tools. Note that since $k=1$
the inference speed for the two models is virtually identical. Table~\ref{tab:eval} reports exact match results and shows that while both tool models improve over the vanilla LLM prompted with 4 samples from the training set, \modelname provides additional performance improvements.

\begin{table}[!t]\centering\small
    \begin{tabular}{lccccc}\toprule
    \textbf{Tool} &  & \multicolumn{3}{c}{\textbf{Recall}} & \textbf{Latency} \\
    \textbf{model} & $k$ & \textbf{@1} & \textbf{@3} & \textbf{@5} & \\ \midrule
        ToolkenGPT & 1 & 0.623 & 0.762 & 0.814 & 1.0x \\
        \modelname & 3 & 0.661 & 0.762 & 0.762 & 2.6x \\
        & 5 & 0.652 & 0.780 & 0.814 & 3.0x \\
        & 7 & 0.653 & 0.775 & 0.822 & 3.3x \\
        & 10 & 0.637 & 0.802 & 0.843 & 4.1x \\ \bottomrule
    \end{tabular}
    \caption{Ablation study on $k$, Vicuna-7B base model.}
    \label{tab:abl_topk}
\end{table}

\begin{table}[!t]\centering\small\setlength{\tabcolsep}{0pt}
    \begin{tabular}{p{\linewidth}}\toprule
        \textbf{Question}:
        Harry slept 9 hours last night. His friend James slept only 2/3 of what Harry slept. How many more hours did Harry sleep than James?
        \textbf{Answer}: 3 \\[2pt]
        \textbf{ToolkenGPT}:
         Harry slept '<multiply>(9,2/3) = 6' hours less than James slept. James slept '<divide>(9,2/3) = 13.5' hours (2/3 of 9) and Harry slept '<add>(9,6) = 15' hours (9 - 6 = 3). So Harry slept 15 hours and James slept 13.5 hours. \#\#\#\# 15 \\[2pt]
         \textbf{\modelname}:
         Harry slept '<multiply>(9,2/3) = 6' hours less than James slept. James slept '<reject>'2/3 of what Harry slept, so he slept 2/3 * 9 = 6 hours. Harry slept 9 hours, so he slept 9 - 6 = 3 hours more than James. \#\#\#\# 3 \\ \midrule
         \textbf{Question}: Geb is 10 less than half the age of Haley. If Haley is 26 years old, how old is Geb? \textbf{Answer}: 3 \\[2pt]
         \textbf{ToolkenGPT}:
         Half the age of Haley is '<divide>(26,2) = 13' years. So Geb is '<add>(10,13) = 23' years old. \#\#\#\# 23 \\[2pt]
         \textbf{\modelname}:
         Half the age of Haley is '<divide>(26,2) = 13' years. So Geb is <reject>10 less than that, which means he is '<subtract>(13,10) = 3' years old. \#\#\#\# 3 \\ \bottomrule
    \end{tabular}
    \caption{Case study on GSM8K and Vicuna-7B.}
    \label{tab:case_study}
\end{table} 

\textbf{Embodied plan generation}. In the \emph{VirtualHome} dataset, the task is to generate a sequence of actions given a question. We follow the setup of the tool selection task but now generate a contiguous sequence of actions, setting $k=3$ for \modelname
and running \modelname only on actions that correspond to an object. We compare the proposed approach with ToolkenGPT and report the success rate and its relaxed version (share of plans that pass through a target state) in Table~\ref{tab:eval}; we see that \modelname consistently improves over ToolkenGPT.


\textbf{Ablation study}.
In the ablation study, we evaluate how performance depends on the number of tools to be reranked by \modelname. Table~\ref{tab:abl_topk} shows the results evaluated on the \emph{MetaTool} dataset with the Vicuna-7B base model; we prepend tool descriptions in the order ranked by ToolkenGPT. We see that performance reaches its optimal value at $k=3$ and degrades with increasing $k$.

\textbf{Case study}. 
Table~\ref{tab:case_study} illustrates the difference in reasoning between ToolkenGPT and \modelname with specific examples from GSM8K (we used Vicuna-7B). 
The reported examples show how the rejection mechanism can allow to prevent the LLM from confusing arithmetic operations calls, which would otherwise lead to an incorrect answer.

\section{Conclusion}\label{sec:concl}

In this work, we have proposed an improvement for the ToolkenGPT approach of learning special token embeddings that adds a reject option and a reranking mechanism for tool selection. Our approach significantly improves the results via in-context learning while still keeping LLM weights frozen and learning only toolken embeddings. 

The considered approach is an important step towards improving the robustness of AI agents and user-facing tools based on modern LLMs. Better tool use not only makes the tool usage results more robust but also has a potential to reduce hallucinations and make LLM answers more trustworthy by allowing an LLM to reliably run external tools to verify its answer; this is a very important consideration in practical usage.

In future work, we hope to extend the \modelname approach to other external tools and/or agents to further expand the capabilities of modern LLMs.

\section{Limitations}\label{sec:limit}

One important limitation of this work is that the LLM used for \modelname should be aware of the tools retrieved by ToolkenGPT in the sense that its tool retrieval accuracy should be sufficiently high. The LLM also should be able to improve the ranking of these tools by reading their descriptions: \modelname relies on the accuracy of this reranking but it is out of our hands. Another limitation is that \modelname has only been evaluated on a limited number of tasks. To make the results more convincing, the framework should be tested on a wide range of tool-learning tasks and datasets. 

\section*{Acknowledgements}
This work was supported by a grant for research centers in the field of artificial intelligence, provided by the Analytical Center for the Government of the Russian Federation in accordance with the subsidy agreement (agreement identifier 000000D730321P5Q0002) and the agreement with the Ivannikov Institute for System Programming of the Russian Academy of Sciences dated November 2, 2021 No. 70-2021-00142.

\bibliography{custom}


\appendix

\section{Appendix: Prompts for \modelname}
\label{appendix}





Prompt for Vicuna-7B and Vicuna-13B:
\begin{small}
\begin{verbatim}
[System]
Below is the instruction that describes a task.
Write a response using the API tools that
appropriately completes the request.
Your output should follow this format:
Action: API call

[Question]
[QUESTION]

[The Start of Assistant's Answer]
Action: 
\end{verbatim}
\end{small}
Prompt for Llama2-7B-chat and Llama2-13B-chat

\begin{small}
\begin{verbatim}
<<SYS>>
Below is the instruction that describes a task.
Write a response using the API tools that
appropriately completes the request.
Your output should follow this format:
Action: API call
<</SYS>>

[INST]
[QUESTION]
[/INST]
Action: 
\end{verbatim}
\end{small}


Prompt for Vicuna-7B and Vicuna-13B:
\begin{small}
\begin{verbatim}
[System]
Suppose you have access to 
the following API tools:
1. tool name: [NAME], 
tool description: [DESCRIPTION],
example question: [EXAMPLE].
....
Below is the instruction that describes a task.
Write a response using the API tools that
appropriately completes the request.
Your output should follow this format:
Action:

[User Question]
[QUESTION]

[The Start of Assistant's Answer]
Action: $
\end{verbatim}
\end{small}

Prompt for Llama2-7B-chat and Llama2-13B-chat:
\begin{small}
\begin{verbatim}
<<SYS>>
Suppose you have access to 
the following API tools:
1. tool name: [NAME], 
tool description: [DESCRIPTION],
example question: [EXAMPLE].
....
Below is the instruction that describes a task.
Write a response using the API tools that
appropriately completes the request.
Your output should follow this format:

Action: API call
<</SYS>>

[INST]
[QUESTION]
[/INST]
Action: 
\end{verbatim}
\end{small}
Empirically we found that the tools selected by ToolkenGPT should appear in descending order of relevance.



\begin{small}
\begin{verbatim}
Task 1:
...
Task 4:
I am in [ROOM]. The objects I can manipulate
are [OBJECTS].
Goal:
[GOAL]
Hint:
[HINT]
Plan:
Which of the objects: <obj1>, <obj2>, <obj3>
is best to continue the plan?
[PLAN]
\end{verbatim}
\end{small}

\end{document}